\documentclass{article}
\pdfpagewidth=8.5in
\pdfpageheight=11in
\usepackage{ijcai21}

\usepackage{times}

\usepackage{soul}
\usepackage{url}
\usepackage[hidelinks]{hyperref}
\usepackage[utf8]{inputenc}
\usepackage[small]{caption}
\usepackage{graphicx}
\usepackage{amsmath}
\usepackage{booktabs}
\urlstyle{same}

\title{Synonymy = Translational Equivalence}

\author{
 Bradley Hauer \hspace{5mm} Grzegorz Kondrak\\
 Department of Computing Science\\
 University of Alberta, Edmonton, Canada\\
 {\tt {\{bmhauer,gkondrak\}}@ualberta.ca}
}

\newcommand{\newcite}[1]{\cite{#1}}

\usepackage{url}
\usepackage{amsthm}
\usepackage{amsmath}
\usepackage{centernot}
\usepackage{multirow}
\usepackage{comment}
\usepackage{enumitem}

\usepackage{algpseudocode}
\algtext*{EndWhile}
\algtext*{EndIf}
\algtext*{EndFor}
\algtext*{EndFunction}

\newcommand{\multiwordnet}{multi-wordnet}
\newcommand{\multisynset}{multi-synset}
\newcommand{\multilingual}{multilingual}
\newcommand{\monolingual}{monolingual}
\newcommand{\multisynonymy}{semantic equivalence}
\newcommand{\multisynonyms}{translational equivalents}

\newtheorem{mytheorem}{Theorem}
\newtheorem{mycorollary}{Corollary}

\date{}

\begin{document}

\maketitle

\begin{abstract}
Synonymy and translational equivalence are
the relations of sameness of meaning within and across languages.
As the principal relations in wordnets and multi-wordnets,
they 
are vital to computational lexical semantics,
yet the field suffers from the absence of
a common formal framework
to define their properties and relationship.
This paper 
proposes a unifying treatment of 
these two relations,
which is validated by experiments on existing resources.
In our view,
synonymy and translational equivalence
are simply different types of semantic identity.
The theory 
establishes a solid foundation for 
critically re-evaluating prior work in cross-lingual semantics,
and facilitating
the creation, verification, and amelioration of lexical resources.
\end{abstract}

\section{Introduction}
\label{intro}

Lexical semantics is crucial to natural language understanding (NLU),
identified by \newcite{navigli2018} as a cornerstone of progress for
artificial intelligence. 
{\em Wordnets},
such as the original Princeton WordNet \cite{fellbaum98book},
as well as their multilingual generalizations ({\em multi-wordnets}),
such as BabelNet \cite{navigli2012},
depend on synonymy and translation to 
define the basic units of their ontologies,
called {\em synsets}.
As sources of lexical knowledge,
they are extensively used in
many state-of-the-art NLP systems.
In particular, they serve as the standard sense inventories 
for semantic tasks such as word sense disambiguation (WSD).

The goal of this paper is to 
bolster computational lexical semantics with a theory
that is sound, empirically validated and immediately applicable.
In prior work,
the notions of senses, synsets, and concepts are often confused,
and theoretical assumptions are unstated.
Our theory provides an explanation 
of the relationship between synonymy and translational equivalence,
as well as their role as the basis of wordnets and multi-wordnets.
It also leads to the development of a 
set of best practices for creating multilingual lexical resources, 
which is currently lacking.

We attempt to address several open questions.
Which of the competing \emph{expand} and \emph{merge} paradigms 
should be applied for multi-wordnet construction?
Can fine-grained senses be clustered 
while preserving fundamental properties of synsets?
How can synonymy be maintained
when extending synsets in multilingual settings?
Can effective error detection algorithms for
automatically-constructed lexical resources
be derived from sound theoretical foundations?

Our main contribution is
a clear and consistent theoretical framework 
for reasoning about senses, concepts, and translations. 
Building on
a set of clearly formulated axioms,
we formulate and prove several theorems
that characterize the relationship 
between synonymy and translational equivalence
at the level of both words and senses.
These results 
allow us to reassess previous methods,
and explore their consequences and implications,
which lead towards resolving open issues. 
While some of these propositions may reflect unstated intuitions
discernible in prior work,
their explicit statement and derivation from first principles 
constitutes a novel contribution.

Our work offers practical benefits to the research community.
We provide experimental evidence 
for the validity of our theory.
Analysis of the apparent exceptions to our theorems shows that 
most of them are due to errors in lexical resources,
which are often caused by versioning issues.
This leads us to propose 
an algorithm that can not only flag such errors but also correct them.
The algorithm 
provides an immediate application of our theoretical results,
and represents a step towards improvement of existing resources, 
as well as 
the creation of new resources in an automated fashion.

Finally, we show that our theory
implies important consequences for lexical semantics.
Since word senses
are determined by word synonymy,
sense granularity cannot be substantially reduced 
without violating the fundamental properties of wordnets.
The expand model of multi-wordnet construction 
has the potential of preserving those properties,
but at the cost of
increased sense granularity.
The most surprising finding is that
the existence of an exact matching between synsets across wordnets implies 
the universality of lexicalized concepts in natural languages.

This paper has the following structure: 
In Section \ref{prelim},
we provide precise definitions of the basic terms and assumptions.
In Section \ref{theorems},
we formulate and prove several theorems and corollaries.
In Section \ref{exp},
we describe our validation experiments
and propose an error correction algorithm.
In Section \ref{cu},
we discuss the implications of our theory for multilingual semantics.

\section{Semantic Equivalence}
\label{prelim}

In this
section,
we define
the theoretical properties of wordnets and multi-wordnets,
and propose a unified treatment of
synonymy and translational equivalence.
The properties,
which follow from the basic definitions and assumptions 
in the original WordNet,
are often implicitly assumed in prior work,
but have never been precisely formulated.
We view existing lexical resources
as imperfect approximations of theoretical models.
The divergence of contemporary resources from a hypothetical ideal 
does not preclude theoretical analysis of lexical semantics.

\subsection{Synonymy}
\label{synonymy}

Synonymy, 
the relation of {\rm sameness of meaning},
can be established by a {\em substitution test}: 
two linguistic expressions (e.g. words)
are considered synonymous
if and only if they
can be substituted for one another
in a sentence 
without changing its meaning\footnote{We disregard 
distinctions in register, stylistic constraints,
frequency, or other factors which do not directly affect meaning.
Substitution may require some grammatical and/or morphological adjustment.}
\cite{murphy10}.
{\em Absolute synonyms} 
can be substituted for one another 
without a change in meaning
in any context,
whereas {\em near-synonyms} are interchangeable in some
contexts.\footnote{This formalization of near-synonymy 
is more precise than definitions based on
the vague notion of word similarity.}
For example, 
since substituting the word {\em gist} with the word {\em essence}
does not change the meaning of the sentence
{\em ``We understand the gist of the argument,''}
the two words
are near-synonyms.
Considered as relations,
both absolute synonymy and near-synonymy are reflexive and symmetric,
but only the former is transitive
(e.g., consider the word triple {\em accusation}, {\em charge}, and {\em cost}).
As a consequence, absolute synonymy is an equivalence relation,
which partitions expressions into semantic equivalence classes. 
In this paper,
the term {\em synonymy} by itself refers to absolute synonymy.

\subsection{Word Senses}
\label{senses}

Although 
synonymy can be defined on various types of linguistic units,
including sentences and phrases
({\em paraphrases}),
our main focus is on words and their senses.
We assume that 
every content word token
has a particular meaning,
and define a {\em word sense} (or simply {\em sense})
as a partition\footnote{For example, 
99.7\% of sense-annotated tokens in SemCor
are assigned a single WordNet sense.}
of these meanings
\cite{kilgarriff1997}.
It follows 
that every content word token is used in exactly one sense,
and that every content word type
has at least one sense.
Words are either {\em monosemous} or {\em polysemous}
depending on whether they have only one or multiple senses.

\subsection{Synsets}
\label{synsets}

A wordnet is a lexical ontology 
in which words 
(including non-compositional phrases, such as `single out')
are organized into {\em synsets}.
A synset is a set of words that are interchangeable in some context
\cite{fellbaum98book}.
Therefore, words which share a synset must be either absolute or near-synonyms.
Each word in a synset 
can be used to express 
a common
{\em lexicalized concept}
\cite{miller1995acm}.
For example, 
a synset that contains the nouns 
{\em gist}, {\em essence}, and {\em core}
represents a single concept. 
The concept,
which each of these near-synonyms can express,
is defined by 
the set of contexts in which these words are interchangeable.

Synsets provide another way of defining a word sense,
namely as a {\em(concept, word)} tuple.
In Table~\ref{sense_examples}
(adapted from \newcite{miller1990lexi}),
columns correspond to words, rows correspond to concepts (or synsets), 
and each non-empty cell is a word sense.
For example, 
$e_1 = \mbox{\it earth}$,
$e_2 = \mbox{\it ground}$,
$e_3 = \mbox{\it reason}$,
$f_1 = \mbox{\it terra}$,
$f_2 = \mbox{\it motivo}$.
Each concept is lexicalized by at least one word,
and each word lexicalizes at least one concept.
A single word sense always represents the same concept. 
The number of senses of each content word
is equal to the number of concepts that it lexicalizes.
Thus,
synsets can be equivalently defined
as either sets of words
or sets of unique word senses.
Furthermore,
words that share no synsets can never be synonymous.

\begin{table}[t]
\centering
\small
\begin{tabular}{c|cccc|ccc|}
\multicolumn{1}{c}{} 
 & \multicolumn{4}{c}{Language $E$} & \multicolumn{3}{c}{Language $F$}\\
\cline{2-8}
 & $e_1$ & $e_2$ & $e_3$ & \ldots & $f_1$ & $f_2$ & \dots  \\
\cline{2-8}
$C_1$ & $s_{1,1}$ &              &              & & $t_{1,1}$ & & \\ 
$C_2$ & $s_{1,2}$ & $s_{2,2}$ &              & & $t_{1,2}$ &  & \\ 
$C_3$ &             & $s_{2,3}$ & $s_{3,3}$ & & & $t_{2,3}$ & \\
\ldots & & & & \ldots & & & \ldots \\
\cline{2-8}
\end{tabular}
\caption{
Word senses as the intersection of words and
concepts.}
\nocite{miller1990lexi}
\label{sense_examples}
\end{table}

Consider the relation between two senses
that holds if
and only if 
the senses share a synset.
The reflexivity, symmetry, and transitivity of the relation
follow directly from the definition of a synset,
which is based on the substitution test.
Since the senses
represent the same lexicalized concept,
this equivalence relation represents
{\em absolute synonymy of senses}.
For example, since
the senses {\rm gist\#n\#2} and {\rm essence\#n\#1}
are absolute synonyms,
the word {\em gist} 
can always be replaced by the word {\em essence}
when it is used in sense {\rm gist\#n\#2}. 
Different from words, 
synonymy of senses in a wordnet is always absolute.
Therefore,
synsets can be viewed as
{\em the equivalence classes} of the relation of absolute synonymy of senses.
This novel proposition,
which we refer to as {\em the wordnet assumption},
is one of the foundations of our theory.

We define five {\em synset properties} 
which follow
from the preceding definitions and assumptions,
and which must be maintained in wordnets:
Although these properties are often implicitly assumed,
to the best of our knowledge,
they have never been explicitly stated.
We use them to construct the proofs in Section~\ref{theorems}.
\setlist{nolistsep}
\begin{enumerate}[noitemsep]
\item \label{pMono}
{\em A word is monosemous iff it is 
in a single synset.}
{\em A word is polysemous iff it is 
in multiple synsets.}
\item \label{pShare}
{\em Words are near-synonyms iff
they share at least one synset.}
{\em Words are absolute synonyms iff
they share all their synsets.}
\item \label{pSame}
{\em Word senses are synonymous iff
they are in the same synset.}
\item \label{pOne}
{\em Every word sense
belongs to exactly one synset.}
\item \label{pEvery}
{\em Every sense of a polysemous word belongs to a different synset.}
\end{enumerate}

\subsection{Translational Equivalence}
\label{te}

Having defined synonymy, wordnets, and synsets in the {\monolingual} setting,
we are now in a position to extend these notions to the {\multilingual} setting.
The cross-lingual analogue of
synonymy is translational equivalence,
which
is the relation of
sameness of meaning between expressions in distinct languages
\cite{uresova2018}.
Translational equivalence
can be established by a
{\em translation test}: two expressions in distinct languages are 
absolute 
translational equivalents
if and only if 
each can be translated into the other
in any context.

We postulate that 
the relations of 
synonymy and translational equivalence
can be combined via a simple union operation 
to produce a single relation of \emph{\multisynonymy},
which is applicable to any pair of expressions 
in the same or different natural languages.

\subsection{Multi-Synsets}
\label{multisynsets}

The notion of trans-lingual {\multisynonymy}
is fundamental to
{\multilingual} semantic networks, or {\em {\multiwordnet}s}, 
such as BabelNet 
\cite{navigli2012}.
Just as wordnets are comprised of inter-connected synsets,
the basic units of multi-wordnets
are {\multilingual} synsets, 
which we refer to
as {\em {\multisynset}s}.
Multi-wordnets and multi-synsets are extensions of the corresponding
{\monolingual} notions to the {\multilingual} setting.
In particular, {\multisynset}s
contain words 
in one or more languages
that express the same concept
\cite{camacho2015acl}.
For example,
BabelNet multi-synsets are populated by
translations of word senses that represent a given concept
\cite{navigli2010}.

Lexical gaps occur
when a concept is lexicalized in some but not all natural languages.  
A compositional phrase or a hypernym 
can be employed to translate such a concept \cite{rudnicka2012}.
In multi-wordnets,
lexical gaps can be represented by
special tokens within multi-synsets
 \cite{bentivogli2003}.

There are two principal approaches to the construction of
{\multilingual} wordnets 
\cite{vossen1996}.
The \emph{expand model} 
uses a {\monolingual} ``pivot'' wordnet
(typically the Princeton WordNet)
to establish a base set of concepts and relations,
to which 
words or synsets
in other languages are then linked.
The \emph{merge model} 
attempts to link the synsets
of independently constructed {\monolingual} wordnets
using a pre-defined set of inter-lingual relations.

We posit that 
multi-synsets should maintain the properties of {\monolingual} 
synsets.\footnote{As postulated by \newcite{kwong2018}.}
If this postulate
is satisfied,
{\monolingual} synsets
can be obtained from 
multi-synsets by simply 
restricting them to
a given individual language.
Furthermore, we posit that 
{\rm words} from distinct languages share a multi-synset 
if and only if they are mutual translations in some context.
Since the senses that share multi-synsets 
represent the same lexicalized concept,
they are {\em absolute {\multisynonyms}}.
Therefore, 
multi-synsets can be viewed as
{\em the equivalence classes} of the relation of {\multisynonymy} 
between senses
within and across languages.
This novel proposition,
which we refer to as {\em the multi-wordnet assumption},
is the second pillar of our theory.

\section{Theorems}
\label{theorems}

Having established our terminology and assumptions,
we now proceed to present our theoretical results.
Each of the following four subsections presents a novel theorem
in lexical semantics.

\subsection{Synonymy and Translation of Senses}
\label{sense_theorems}

We first present 
our principal theorem and two corollaries
which establish the relationship 
between synonymy and translational equivalence
at the level of senses.
Our notation follows the example in Table~\ref{sense_examples}.
We use different base letters for distinct languages:
$s$ vs.\  $t$ for senses, 
$e$ vs.\  $f$ for words,
and
$E$ vs.\  $F$ for languages.
Subscripts distinguish between senses and words within the same language.
The predicates {\em syn}$(\cdot,\cdot)$ and {\em tr}$(\cdot,\cdot)$
express the propositions that 
two expressions (senses or words) 
are absolute synonyms or {\multisynonyms}, respectively.

\begin{mytheorem}
\label{four-senses}
Given two pairs of word senses $(s_x,t_u)$
and $(s_y,t_v)$ such that both pairs are {\multisynonyms}:
$s_x$ and $s_y$ are synonymous 
{\bf if and only if}
$t_u$ and $t_v$ are synonymous.
\end{mytheorem}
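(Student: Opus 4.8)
The plan is to translate every occurrence of synonymy and {\multisynonymy} into a statement about shared (multi-)synsets, using the wordnet and multi-wordnet assumptions, and then read off the conclusion from the fact that multi-synset membership is an equivalence relation. First I would invoke the multi-wordnet assumption on the two hypotheses: since $(s_x,t_u)$ and $(s_y,t_v)$ are {\multisynonyms}, $s_x$ shares a multi-synset with $t_u$, and $s_y$ shares a multi-synset with $t_v$. Since synset property~\ref{pOne}, lifted to multi-synsets, says every sense lies in exactly one multi-synset, I can name \emph{the} multi-synset of each sense; write $M(s_x)=M(t_u)$ and $M(s_y)=M(t_v)$.

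For the forward direction, assume $\mathit{syn}(s_x,s_y)$. By property~\ref{pSame}, $s_x$ and $s_y$ occupy the same synset, hence they are absolute synonyms of senses; since synonymy is one of the two relations whose union is {\multisynonymy}, they are also {\multisynonyms}, so $M(s_x)=M(s_y)$. Chaining the equalities gives $M(t_u)=M(s_x)=M(s_y)=M(t_v)$, i.e.\ $t_u$ and $t_v$ share a multi-synset. Now I apply the postulate that restricting a multi-synset to a single language yields a {\monolingual} synset: because $t_u$ and $t_v$ both belong to language $F$, they lie in the same synset of the $F$ wordnet, and property~\ref{pSame} then gives $\mathit{syn}(t_u,t_v)$. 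The converse follows by running the identical argument with the pairs $(s_x,s_y)$ and $(t_u,t_v)$ interchanged, which is legitimate because the hypothesis is symmetric in the two pairs and {\multisynonymy} is symmetric.

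The one step that deserves care is the passage from ``$t_u$ and $t_v$ share a multi-synset'' to ``$t_u$ and $t_v$ are synonymous'': it depends on the fact that {\multisynonymy} is the union of synonymy with \emph{cross-lingual} translational equivalence, so its restriction to a single language collapses to synonymy, together with the restriction postulate for multi-synsets and the fact that sense synonymy in a wordnet is always absolute. I expect this to be the main obstacle only in the sense of needing to be spelled out explicitly, since it is exactly where the {\monolingual} and {\multilingual} layers of the theory are glued together; everything else is bookkeeping with the equivalence-class structure guaranteed by the two wordnet assumptions.
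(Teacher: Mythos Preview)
Your proposal is correct and follows essentially the same route as the paper: both proofs invoke property~\ref{pOne} to pin each sense to a unique multi-synset, property~\ref{pSame} to equate synonymy with shared (multi-)synset membership, and the multi-wordnet assumption to link each translational pair; the biconditional then falls out of chaining equalities of multi-synsets. Your version is more explicit than the paper's---in particular you spell out the restriction step from ``same multi-synset'' back to ``same monolingual synset,'' which the paper leaves implicit by ending at ``semantically equivalent''---but the underlying argument is identical.
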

\begin{proof}
By synset property \#\ref{pOne},
every sense belongs to exactly one multi-synset.
By synset property \#\ref{pSame},
if two word senses are synonymous,
they must be in the same multi-synset.
By the multi-wordnet assumption,
$s_x$ must share a multi-synset with $t_u$,
and $s_y$ must share a multi-synset with $t_v$.
Therefore,
if either $s_x$ and $s_y$ or $t_u$ and $t_v$ are synonymous,
all four senses
must belong to the same multi-synset,
which implies that they are semantically equivalent.
\end{proof}

Both of the following corollaries 
differ from Theorem \ref{four-senses} in that 
they involve triples of senses, rather than quadruples.
The first corollary,
which can be viewed as a special case of Theorem~\ref{four-senses}, 
states that 
{\rm senses that translate into the same foreign sense must be synonymous.}
This observation could lead to an algorithmic method for
constructing or augmenting synsets using sense-annotated bitexts.

\begin{mycorollary}
\label{theorem-tis}
\textbf{Translational Equivalence of Senses Implies Synonymy:}
$ \forall s_x,s_y \in E: 
\forall t_z \in F:
tr(s_x,t_z) \wedge tr(s_y,t_z) \Rightarrow syn(s_x,s_y). $ 
\end{mycorollary}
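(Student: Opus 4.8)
The plan is to derive Corollary~\ref{theorem-tis} as an immediate specialization of Theorem~\ref{four-senses}. First I would observe that the corollary is exactly the instance of the theorem obtained by setting $t_u = t_v = t_z$. The two pairs $(s_x, t_z)$ and $(s_y, t_z)$ are both {\multisynonyms} by the hypotheses $tr(s_x,t_z)$ and $tr(s_y,t_z)$, so the preconditions of Theorem~\ref{four-senses} are satisfied. The theorem then gives the biconditional ``$s_x$ and $s_y$ are synonymous {\bf iff} $t_z$ and $t_z$ are synonymous.''

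The second step is to discharge the right-hand side of that biconditional. Since synonymy of senses is reflexive --- this follows from synset property \#\ref{pSame} together with the fact that every sense lies in a synset, or equivalently from the reflexivity of absolute synonymy established in Section~\ref{synonymy} --- $syn(t_z, t_z)$ holds trivially. Feeding this into the ``if'' direction of the instantiated theorem yields $syn(s_x, s_y)$, which is the conclusion of the corollary.

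Alternatively, one could give a direct argument that mirrors the proof of the theorem: by synset property \#\ref{pOne} each of $s_x$, $s_y$, $t_z$ belongs to exactly one multi-synset; by the multi-wordnet assumption $tr(s_x,t_z)$ forces $s_x$ and $t_z$ into a common multi-synset, and likewise $tr(s_y,t_z)$ forces $s_y$ and $t_z$ into a common multi-synset; since $t_z$ has only one multi-synset, all three senses lie in it, so $s_x$ and $s_y$ share a multi-synset and hence are synonymous by synset property \#\ref{pSame}. I would present the one-line reduction to Theorem~\ref{four-senses} as the main proof and perhaps mention the direct version parenthetically.

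There is essentially no obstacle here: the only thing to be careful about is making the reflexivity step explicit rather than leaving ``$t_z$ and $t_z$ are synonymous'' as a silent triviality, and making clear that the universal quantifiers over $s_x, s_y, t_z$ in the corollary are handled uniformly since the theorem holds for arbitrary senses. The content is entirely a matter of instantiating a variable and invoking reflexivity.
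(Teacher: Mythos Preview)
Your proposal is correct and matches the paper's approach: the paper states that Corollary~\ref{theorem-tis} ``can be viewed as a special case of Theorem~\ref{four-senses}'' and that it follows from the transitivity of {\multisynonymy}, which is precisely your instantiation $t_u = t_v = t_z$ plus reflexivity, with your alternative direct argument spelling out the transitivity step. You have simply made explicit what the paper leaves as a one-line remark.
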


The second corollary establishes the reverse implication ---
{\rm all senses that are synonymous must translate into the same sense,
provided that a single-word translation exists in the other language.} 

\begin{mycorollary}
\label{sense-sit}
\textbf{Synonymy of Senses Implies Translational Equivalence:}
$ \forall s_x \in E, s_y \in E, t_z \in F: 
syn(s_x,s_y) \wedge tr(s_y,t_z)
\Rightarrow tr(s_x,t_z). $ 
\end{mycorollary}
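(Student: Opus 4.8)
The plan is to follow the same pattern as the proof of Theorem~\ref{four-senses}: translate both hypotheses into statements about multi-synset membership, and then exploit the fact that, by the multi-wordnet assumption, each sense lies in exactly one multi-synset. Concretely, I would treat synonymy of senses and translational equivalence of senses as the two sub-relations whose union is \multisynonymy, and use that the latter relation's equivalence classes are exactly the multi-synsets, together with the synset properties of Section~\ref{multisynsets} (which, by postulate, multi-synsets inherit).

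The key steps, in order, are as follows. First, from $syn(s_x,s_y)$ and synset property~\#\ref{pSame}, the senses $s_x$ and $s_y$ must lie in the same multi-synset; call it $M$. Second, from $tr(s_y,t_z)$ and the multi-wordnet assumption, $t_z$ shares a multi-synset with $s_y$, and by synset property~\#\ref{pOne} (every sense belongs to exactly one multi-synset) that multi-synset must again be $M$, so $t_z \in M$. Third, $s_x$ and $t_z$ therefore lie in the same multi-synset and are hence \multisynonymous; since they belong to distinct languages, this is precisely translational equivalence, i.e.\ $tr(s_x,t_z)$, as required. (Equivalently, one may phrase the argument as: synonymy of senses and translational equivalence of senses are both contained in \multisynonymy, which is an equivalence relation by the multi-wordnet assumption, so $syn(s_x,s_y)$ and $tr(s_y,t_z)$ compose by transitivity to give $tr(s_x,t_z)$.)

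I do not expect a genuine obstacle here — the result is essentially a restatement of the equivalence-class structure already assumed — but two points deserve an explicit remark. The first is that synset properties~\#\ref{pSame} and~\#\ref{pOne} were originally stated for {\monolingual} synsets; applying them to multi-synsets is legitimate only because of the postulate in Section~\ref{multisynsets} that multi-synsets maintain the properties of {\monolingual} synsets, so I would cite that postulate explicitly. The second is the side condition flagged in the statement: the conclusion is vacuous unless a single-word translation $t_z$ actually exists in $F$, but since $t_z$ is universally quantified and the antecedent already supplies $tr(s_y,t_z)$, the lexical-gap case simply never falls within the scope of the implication, so no special handling is needed. Unlike Corollary~\ref{theorem-tis}, this statement is not literally a special case of Theorem~\ref{four-senses} (applying the theorem would require already knowing a translation of $s_x$), so the direct multi-synset argument above is the cleanest route.
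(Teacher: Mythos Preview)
Your proposal is correct and matches the paper's own justification: the paper simply remarks that both corollaries (and Theorem~\ref{four-senses}) ``follow from the transitivity of the relation of \multisynonymy,'' which is exactly the argument you give, just spelled out in more detail via multi-synset membership and synset properties~\#\ref{pSame} and~\#\ref{pOne}. Your additional observations about the postulate that multi-synsets inherit the {\monolingual} synset properties, the lexical-gap side condition, and why this is not literally an instance of Theorem~\ref{four-senses} are accurate and go beyond what the paper makes explicit.
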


Both corollaries,
as well as Theorem~\ref{four-senses} itself,
follow
from the transitivity of the relation of {\multisynonymy}.

\subsection{Synonymy and Translation of Words}
\label{word_theorems}

\newcite{yao2012} observe that 
prior work,
such as \newcite{gale1992} and \newcite{diab2002},
is based on one of the two ``alternate'' assumptions,
which have the same antecedent but different consequents:

\begin{it}
{\em Antecedent:} 
Two different words 
$e_x$ and $e_y$ in language $E$
are aligned to the same word 
$f_z$ in language $F$.

{\em Consequents:}
\begin{enumerate}
\item 
$f_z$ is polysemous
{\em (``polysemy assumption'')}
\item 
$e_x$ and $e_y$ are synonymous
{\em (``synonymy assumption'')}
\end{enumerate}
\end{it}

\newcite{yao2012} 
perform experiments 
on two bilingual corpora,
using a lexical sample of 50 words from OntoNotes \cite{hovy2006},
and conclude that neither assumption holds significantly more often
than the other.
However, they stop short of proposing a principled solution to the problem.

According to our theory,
neither of the two assumptions need hold universally.
For example,
although both
{\em time} and
{\em weather}
are translations of the Italian word
{\em tempo},
it would be wrong to conclude that the two English words 
are near-synonyms.
This is because,
unlike absolute synonymy of senses,
near-synonymy of words is not transitive
in either {\monolingual} or {\multilingual} setting.
On the other hand, 
although both
{\em bundle} and
{\em package}
are translations of the Italian
{\em involto},
this does not imply that
the Italian word is polysemous;
indeed, 
both English words 
translate a single sense of {\em involto}.

We postulate that 
the polysemy and synonymy assumptions
can be integrated
into a single theorem.
In fact, 
the two consequents are not exclusive;
for example,
{\em test} and
{\em trial},
which are {\bf synonymous},
are both translations of
Italian 
{\em prova},
which is {\bf polysemous}.
Thus,
the theorem entails
a {\em non-exclusive union}
of the two consequents:
\begin{mytheorem}
\label{word-theorem}
Given two words $e_x$ and $e_y$ in language $E$
and a word $f_z$ in language $F$:
{\bf if} 
$e_x$ and $e_y$ are both translations of $f_z$
{\bf then}
$e_x$ and $e_y$ are near-synonymous
{\bf or}
$f_z$ is polysemous.
\end{mytheorem}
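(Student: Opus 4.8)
The plan is to reduce the word-level claim to the sense-level framework of Section~\ref{sense_theorems} by first unpacking what it means for two words to be translations. By the multi-wordnet assumption, ``$e_x$ and $e_y$ are both translations of $f_z$'' means that $e_x$ shares a multi-synset with $f_z$ and $e_y$ shares a multi-synset with $f_z$; spelled out at the level of senses, there is a sense $s_x$ of $e_x$ sharing a multi-synset $M_x$ with some sense $t_1$ of $f_z$, and a sense $s_y$ of $e_y$ sharing a multi-synset $M_y$ with some sense $t_2$ of $f_z$. I would then split into two cases according to whether $t_1$ and $t_2$ are the same sense of $f_z$.

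In the case $t_1 \ne t_2$, the word $f_z$ has (at least) two distinct senses, so $f_z$ is polysemous by synset property~\#\ref{pMono} (equivalently, by the definition of polysemy); the second disjunct of the conclusion holds. In the case $t_1 = t_2$, synset property~\#\ref{pOne} says this common sense of $f_z$ belongs to exactly one multi-synset, so $M_x = M_y$; call it $M$. Then $s_x$ and $s_y$ both lie in $M$, and since they are senses of words in the same language $E$, they also lie in the monolingual synset obtained by restricting $M$ to $E$ (as postulated for multi-synsets). Hence $e_x$ and $e_y$ share a synset, and by synset property~\#\ref{pShare} they are near-synonymous; the first disjunct holds. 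In either case the (non-exclusive) disjunction in the statement is satisfied.

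The main obstacle — really the only subtle point — is the faithful translation of the informal hypothesis ``$e_x$ and $e_y$ are both translations of $f_z$'' into the sense-and-multi-synset formalism, and in particular the observation that the two senses of $f_z$ that witness the two translations need not coincide. Once that is made precise, the case analysis is forced, and each branch follows immediately from a single synset property. It is worth noting in the write-up that no transitivity of near-synonymy is (or could be) invoked here; that asymmetry between senses and words is precisely why the consequent must be a disjunction, consistent with the discussion of the \emph{tempo}/\emph{involto} examples preceding the statement.
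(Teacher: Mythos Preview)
Your proof is correct and follows essentially the same approach as the paper: a two-case split that yields the polysemy disjunct in one branch and, in the other, forces $e_x$ and $e_y$ into a common multi-synset so that synset property~\#\ref{pShare} gives near-synonymy. The only cosmetic difference is that the paper splits on whether $f_z$ is polysemous or monosemous (invoking property~\#\ref{pMono}), whereas you split on whether the two witnessing senses $t_1,t_2$ of $f_z$ coincide (invoking property~\#\ref{pOne}); the two decompositions are equivalent and the argument is otherwise the same.
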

\begin{proof}
If $f_z$ is polysemous, 
the implication holds trivially.
Otherwise, $f_z$ must be monosemous, 
so by synset property \#\ref{pMono},
there exists only one multi-synset that contains $f_z$.
By the multi-synset property,
both $e_x$ and $e_y$ must share a multi-synset with $f_z$.
Therefore, 
by synset property \#\ref{pShare},
since $e_x$ and $e_y$ share a synset,
$e_x$ and $e_y$ are near-synonyms.
\end{proof}

In conclusion,
our theory demonstrates that 
systems which are based exclusively on one of the two assumptions,
such as \newcite{bannard05} and \newcite{lefever2011},
fail to consider a substantial number of relevant instances.
Theorem \ref{word-theorem}
provides a more reliable foundation, 
which we validate empirically in Section \ref{exp2}.

\subsection{Absolute Synonymy of Words}

\newcite{yao2012} use the term synonymy
to mean 
near-synonymy.
What does our theory predict 
if synonymy of words is taken to mean absolute synonymy instead?
It turns out that exactly one of the two assumptions,
the synonymy assumption,
holds universally.

In Section~\ref{sense_theorems},
we formulated Theorem~\ref{four-senses} and its two corollaries 
to characterize the relation between 
absolute synonymy and translation of {\em senses}.
We can formulate analogous results
to characterize the relation between 
absolute synonymy and translation of {\em words}.

\begin{mytheorem}
\label{four-words}
Given two pairs of words $(e_x,f_u)$
and $(e_y,f_v)$ that are absolute {\multisynonyms}:
$e_x$ and $e_y$ are absolute synonyms 
{\bf if and only if}
$f_u$ and $f_v$ are absolute synonyms.
\end{mytheorem}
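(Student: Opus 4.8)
The plan is to follow the same strategy as the proof of Theorem~\ref{four-senses}, but now at the level of words, reasoning about the \emph{set} of multi-synsets that a word belongs to. For a word $w$, let $\mathcal{M}(w)$ denote the collection of {\multisynset}s containing $w$; by synset property~\#\ref{pMono} this set is always non-empty, so no degenerate case arises. The first step is to record the word-level, cross-lingual analogue of the second clause of synset property~\#\ref{pShare}: under the unified treatment of synonymy and translational equivalence as {\multisynonymy}, two words are absolute {\multisynonyms} \emph{if and only if} $\mathcal{M}(w) = \mathcal{M}(w')$. One direction is just the ``absolute synonyms share all their synsets'' clause of property~\#\ref{pShare} lifted across languages: if $w$ and $w'$ lexicalize precisely the same concepts, then each is a valid translation (or synonym) of the other in every context, and conversely — which is exactly the absolute form of the translation test. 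The multi-synset property — that words from distinct languages share a multi-synset iff they are mutual translations in some context — is the ``near'' version of the same statement.

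Granting this characterization, the theorem follows by a short chain of equivalences with no case analysis. Since $(e_x,f_u)$ are absolute {\multisynonyms}, we have $\mathcal{M}(e_x) = \mathcal{M}(f_u)$, and since $(e_y,f_v)$ are absolute {\multisynonyms}, we have $\mathcal{M}(e_y) = \mathcal{M}(f_v)$. By synset property~\#\ref{pShare}, the words $e_x$ and $e_y$ are absolute synonyms iff $\mathcal{M}(e_x) = \mathcal{M}(e_y)$; substituting the two equalities just obtained, this holds iff $\mathcal{M}(f_u) = \mathcal{M}(f_v)$, which by property~\#\ref{pShare} again holds iff $f_u$ and $f_v$ are absolute synonyms. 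Both directions of the biconditional are delivered simultaneously by this chain.

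The one step that carries the conceptual weight is the first: justifying that absolute translational equivalence of \emph{words} is the faithful cross-lingual lift of ``sharing all synsets,'' so that $\mathcal{M}(\cdot)$ behaves for translational equivalents exactly as it does for synonyms. This is precisely where the paper's foundational postulate — that synonymy and translational equivalence are two faces of a single relation — does the real work; once it is in place, the remainder is a purely set-theoretic transitivity argument parallel to Theorem~\ref{four-senses}. It is worth noting why the ``share all'' formulation is essential: sharing merely \emph{some} multi-synset (the near-synonymy relation) is not transitive, which is exactly why Theorem~\ref{word-theorem} could only guarantee a disjunction of consequents, whereas the present statement recovers a clean equivalence.
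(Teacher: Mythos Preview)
Your proposal is correct and follows essentially the same approach as the paper: both arguments reduce absolute (translational) synonymy of words to equality of the sets of multi-synsets they inhabit, and then use transitivity of that equality across the two given translational-equivalence pairs. Your use of the notation $\mathcal{M}(\cdot)$ makes the chain of equalities more explicit than the paper's prose version; the only cosmetic difference is that the paper also cites synset property~\#\ref{pEvery} to ground the word-to-synset correspondence, whereas you proceed directly from property~\#\ref{pShare} and the unified {\multisynonymy} postulate.
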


\begin{proof}
By synset property \#\ref{pEvery},
every sense of a given word belongs to a different synset.
By synset property \#\ref{pShare},
absolute synonyms share all their synsets.
By the multi-wordnet assumption,
$e_x$ must share all its multi-synsets with $f_u$,
and $e_y$ must share all its multi-synsets with $f_v$.
Therefore,
if either $e_x$ and $e_y$ or $f_u$ and $f_v$ are absolute synonyms,
all four words
must share all their multi-synsets,
which implies that they are semantically equivalent.
\end{proof}

Just like Theorem~\ref{four-senses},
Theorem~\ref{four-words} implies two corollaries.
First,
if two different words can {\em always} be translated
by the same foreign word (and vice-versa),
then the two words are absolute synonyms.
Second, 
the sets of translations of absolute synonyms must be identical.
We omit the formal statements of the two corollaries,
as they are almost identical to 
Corollaries~\ref{theorem-tis} and~\ref{sense-sit} in
Section~\ref{sense_theorems}.

\subsection{Translations of Near-Synonyms}

Our final theorem 
can be viewed as the converse of the synonymy assumption from
Section~\ref{word_theorems}.
Theorem~\ref{sit-words} states that 
near-synonymy implies the existence of a shared translation.
Since the theorem
needs to account for lexical gaps, 
it employs the term {\em phrase},
which may be either a single word or a
sequence of words.

\begin{mytheorem}
\label{sit-words}
Given two words $e_x$ and $e_y$ in language $E$:
{\bf if} $e_x$ and $e_y$ are near-synonyms
{\bf then} there exists an phrase $\varphi$ in language $F$
such that both $e_x$ and $e_y$ can be translated by $\varphi$.
\end{mytheorem}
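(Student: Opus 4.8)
The plan is to trace the near-synonymy of $e_x$ and $e_y$ down to a single shared lexicalized concept, and then argue that this concept is expressible by \emph{some} phrase in language $F$, whether or not it is lexicalized there. First I would invoke synset property \#\ref{pShare}: since $e_x$ and $e_y$ are near-synonyms, they share at least one synset $S$, and by the definitions in Section~\ref{synsets} this $S$ corresponds to a single lexicalized concept $C$. By the postulate that a {\monolingual} synset is the restriction of a {\multisynset} to one language, there is a {\multisynset} $M$ whose restriction to $E$ is $S$; in particular $e_x, e_y \in M$, and $M$ is precisely the equivalence class of {\multisynonymy} corresponding to the concept $C$.

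Next I would split on whether $C$ is lexicalized in $F$. If it is, then $M$ contains some word $f_z \in F$. Since $e_x$ and $f_z$ both lie in the {\multisynset} $M$, the multi-synset property yields that $e_x$ and $f_z$ are mutual translations in some context --- namely any context that expresses $C$ --- and likewise for $e_y$ and $f_z$. Taking $\varphi = f_z$, a phrase consisting of a single word, then satisfies the claim. If instead $C$ is subject to a lexical gap in $F$, I would appeal to the standard treatment of lexical gaps in multi-wordnets \cite{rudnicka2012,bentivogli2003}: a compositional phrase or a hypernym $\varphi$ in $F$ can be employed to translate the concept $C$. Since both $e_x$ and $e_y$ can be used to express $C$ --- that is exactly the content of their sharing the synset $S$ --- both can be translated by $\varphi$ in those contexts, which again establishes the claim. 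Combining the two cases, a suitable phrase $\varphi$ always exists.

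The main obstacle, and the reason the statement is deliberately phrased in terms of \emph{phrases} rather than words, is the lexical-gap case: it rests on an effability-type assumption that every concept lexicalized in one natural language can be rendered by some (possibly compositional, possibly superordinate) expression in any other. I would either state this explicitly as an auxiliary assumption or lean on the cited literature on lexical gaps; note also that allowing a hypernym as a ``translation'' is a mild loosening of translational equivalence that the paper's framework already tolerates. The remainder of the argument is routine bookkeeping with the synset properties and the multi-wordnet assumption, and introduces no ideas beyond those used for Theorems~\ref{four-senses} and~\ref{word-theorem}.
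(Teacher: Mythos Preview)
Your proof is correct but proceeds along a genuinely different route from the paper's. The paper does not case-split on whether the shared concept is lexicalized in $F$; instead it works entirely at the sentence level. From the substitution test, near-synonymy of $e_x$ and $e_y$ yields two sentences $S_1, S_2$ in $E$ with the same meaning, differing only in $e_x$ versus $e_y$; since $S_1$ and $S_2$ mean the same thing, they admit a common translation $T$ in $F$; the phrase $\varphi$ is then whatever subsequence of $T$ renders $e_x$ (equivalently $e_y$). This still leans on an effability-type assumption, but at the \emph{sentence} level (any $E$-sentence has a meaning-preserving $F$-translation) rather than at the concept level, and it never needs to invoke hypernyms or the cited lexical-gap literature. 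Your concept-level argument, on the other hand, stays closer to the {\multiwordnet} machinery and has the pleasant side effect that the corollary the paper states immediately after the theorem --- that a single word $f_z$ suffices when the concept is lexicalized in $F$ --- drops out for free as the first branch of your case split, whereas the paper's proof requires that to be noted separately.
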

\begin{proof}
Since $e_x$ and $e_y$ are near-synonyms,
there exists a multi-synset $M$ that they share.
By the substitution test,
there must exist a pair of sentences $S_1$ and $S_2$
that have the same meaning, 
and differ only in containing either $e_x$ for $e_y$ at the same position.
Since $S_1$ and $S_2$ have the same meaning,
they can both be translated by the same sentence $T$ in language $F$.
The sequence of one or more words within $T$
that translate $e_x$ and $e_y$ in $S_1$ and $S_2$, respectively,
constitutes the phrase $\varphi$.
\end{proof}

As a corollary,
if the concept that corresponds to the multi-synset $M$ 
is {\em lexicalized} in language $F$
(i.e., there is no corresponding lexical gap in $F$),
then there exists a word $f_z$ that can translate both $e_x$ and $e_y$.

\section{Experimental Evidence}
\label{exp}

In this section, we describe experiments that test the
predictions of our theory,
and demonstrate how our theory can be used
to automatically detect and correct errors in semantic resources.

\subsection{Methodology}
\label{methodology}

Our methodology is based on cross-checking 
the evidence for synonymy and translational equivalence
between different semantic resources.
We empirically validate our theorems
on a sense-annotated word-aligned parallel corpus ({\em bitext})
coupled with a multi-wordnet that covers the two languages of the bitext.
We operationalize the relations of synonymy and translational equivalence
on senses as follows:
(a) senses 
are synonymous
if they are associated with the same multi-synset,
and (b)
senses 
are translational equivalents
if a pair of words annotated with those senses are aligned 
in the bitext.\footnote{This is an operational simplification
because not all words that are aligned in a bitext
are necessarily dictionary translations.
In non-literal prose translation,
the relationship between senses may instead be that of lexical entailment
(e.g. English {\em plant} translated as Italian {\em fiore} ``flower.'')}

Not all predictions of our theory can be tested in this way.
An alignment link in a sense-annotated bitext demonstrates that 
two words or senses can be translated into each other.
However, 
since no bitext, regardless of its size,
can be guaranteed to contain all possible translations,
it cannot furnish conclusive proof that two words 
are translational equivalents,
or that they both can be translated into the same foreign word.
Similarly, no bilingual dictionaries or thesauri 
include exhaustive lists of all translations and synonyms.
However, 
all of our theorems and corollaries 
are proven using the same theory of sense, synonymy, and translation.
Therefore, 
the empirical evidence that we present 
for Theorems \ref{four-senses} and \ref{word-theorem}
and Corollary~\ref{theorem-tis}
provides indirect support for 
Theorems \ref{four-words} and \ref{sit-words}
and Corollary~\ref{sense-sit}.

\subsection{Resources}
\label{experiment-msc}

The multi-wordnet we use in our experiments is
MultiWordNet\footnote{\it{http://multiwordnet.fbk.eu}}
(MWN) version 1.5.0
\cite{pianta2002}.
We chose MWN because of its superior quality and coverage,
in comparison to other multi-wordnets,
such as BabelNet or Open Multilingual Wordnet \cite{bond2013}.
Each multi-synset in MWN
is associated 
with a part of speech and a unique identifier,
while each English lemma
is labeled with the corresponding WordNet 1.6 sense number
in each of its synsets.

As our word-aligned sense-annotated bitext,
we use
MultiSemCor\footnote{\it{http://multisemcor.fbk.eu/index.php}}
(MSC) version 1.1
\cite{bentivogli2005},
a sense-annotated English-Italian bitext
crafted semi-automatically by bilingual lexicographers
using a professional translation of SemCor 
\cite{miller1993}.
We exclude
English tokens that are annotated with multiple sense numbers,
as well as instances with missing sense annotations
or mismatched POS.
Together, these criteria apply to approximately 1\% of all instances,
which leaves 91,438 aligned English-Italian sense pairs.

\subsection{Absolute Synonymy and Translation}

Absolute synonymy of words is considered rare,
to the point that its very existence is denied \cite{jm2}.
By synset property \#\ref{pShare},
words that share all their synsets are absolute synonyms.
According to this criterion,
69,775 words            
in Princeton WordNet 3.0
have at least one absolute synonym.
They include
variant spellings, such as {\em liter} and {\em litre},
variant terminology, such as {\em atmometer} and {\em evaporometer},
and abbreviations, such as {\em kg} and {\em kilogram}.
While most of these words are monosemous,
there are absolute synonyms that share as many as eight synsets;
For example, the nouns \emph{haste} and \emph{hurry}
are absolute synonyms with three senses each.

The {\multilingual} extension of absolute synonymy
is believed to be similarly rare \cite{uresova2018}.
Yet, we find that
MultiWordNet contains 45,717 English-Italian word pairs
which appear in exactly the same synsets,
indicating that one can always translate the other.
Many of these absolute {\multisynonyms} are cognates, 
such as {\em globally} and {\em globalmente},
and borrowings,
such as {\em internet}.

\subsection{Word-Level Verification}
\label{exp2}

We test Theorem~\ref{word-theorem} (Section~\ref{sense_theorems})
on MSC by identifying all triples 
that consist of two different English words and
an Italian word that they are both aligned to at least once in MSC,
e.g., {\em (inverse, opposite, contrario)}.
The theorem implies that in each such case
the two English words are near-synonymous or the Italian word is polysemous.
We find that among 17,272 distinct triples,
17,136 include a polysemous Italian word,  
3,343 contain a pair of English near-synonyms,
and 3,207 involve both polysemy and synonymy.
This shows that in MSC the polysemy assumption 
holds substantially more often than the synonymy assumption,
which differs from the conclusions of \newcite{yao2012}
(Section~\ref{word_theorems}).
We attribute this discrepancy 
to their use of a coarse-grained OntoNotes sense inventory,
as well as testing relatively small lexical samples,
rather than entire lexicons.
Since no exceptions to Theorem~\ref{word-theorem} are found,
we conclude that the experiment fully supports 
its validity.

\subsection{Sense-Level Verification}
\label{exp1}

In Section \ref{te}, we posited that
the relations of synonymy and translational equivalence
could be viewed as the intra- and inter-lingual components
of a single relation of semantic equivalence,
applicable to any pair of expressions
in identical or different languages.
MWN and MSC, which were developed independently,
allow us to empirically test 
this foundational postulate of our theory
by checking whether all senses
that are aligned in the bitext
do indeed share a multi-synset. 

We find that only 18 of the aligned sense pairs
appear to violate the synonymy constraint.
10 of these instances 
involve Italian lemmas annotated with synsets which do not contain them,
which implies either an omission in MWN or a translation error.
Further analysis shows that
the remaining 8 instances are due to word alignment errors in MSC.

We conclude that 
the annotations in MWN and MSC
fully conform to the prediction of our theory
that translational equivalence of senses implies their synonymy
(Section~\ref{te}).
Furthermore, the experiment suggests that 
any apparent exceptions to this prediction 
may indicate sense annotation errors.

\subsection{Upgrading MultiSemCor}
\label{upgrade}

The sense-level experiment in previous section
identified only a small number of errors
because of the high-quality of manually constructed resources.
However, many lexical resources 
are created and updated with automated procedures,
resulting in less reliable annotations.
In this section, we aim to demonstrate that our theory 
can be used to detect a substantial number of errors in a noisy resource,
paving the way to a radical improvement of its quality.

The non-trivial task that we focus on is
updating sense annotations to a new version of WordNet.
The importance of this task is demonstrated by 
previously reported attempts.
\newcite{daude2003} provide probabilistic 
mappings between senses in different WordNet versions.
\newcite{bond2013} simply take the most probable 
mapping for each synset,
while \newcite{raganato2017} manually correct annotations that
cannot be confidently mapped.
Specifically,
we show that our theorems can identify 
MSC sense annotations that are correct under WordNet 1.6
but incorrect under WordNet 3.0.
For the remainder of this section,
WordNet 3.0 is used to determine English word and sense synonymy.

We apply Theorem \ref{four-senses} and Corollary \ref{theorem-tis},
in both translation directions.
In each of the four sense-level experiments,
we identify in the annotated bitext all unique instances
that satisfy the premise of the proposition that is being tested.
For Corollary~\ref{theorem-tis},
the instances are sense {\em triples}
that consist of pairs of source senses that are aligned with 
the same target sense.
For Theorem \ref{four-senses},
the instances are sense {\em quadruples}
that consist of pairs of source senses that are aligned with 
two distinct but synonymous target senses.
The two source senses must be distinct, but they may belong to the same word.
Finally, we verify whether the two source senses are synonymous,
as predicted by our theory.

The results of the experiments are summarized in Table \ref{table-thm1}.
The first row shows the number of unique instances found in MSC,
while the second row shows what number of those instances 
appear to contradict our theory.
Is each of these apparent exceptions to 
Corollary \ref{theorem-tis} and Theorem \ref{four-senses} 
an indication of an out-of-date sense annotation?
Or are some of them actual exceptions to our theory?
In the following two sections,
we analyze samples of the exceptions
in order to answer these questions.

\subsection{Manual Exception Analysis}
\label{analysis}

For each of Corollary \ref{theorem-tis}
and Theorem \ref{four-senses},
we randomly selected 25 of the apparent exceptions
in the {\em en}$\rightarrow${\em it} direction.
Each of these exceptions 
consists of two English-Italian sense alignment pairs,
which involve Italian senses 
that are either identical  (for Corollary \ref{theorem-tis}),
or distinct but synonymous (for Theorem \ref{four-senses}).
For each exception, we manually analyze a sample of 
up to ten of the corresponding English sentences from the bitext.
We consult WordNet 3.0 synsets, glosses, and usage examples
to make judgments on the correctness of the annotations.

We find that all 50 apparent exceptions,
as our theory would predict,
include a sense annotation which is not correct under WordNet 3.0.
Specifically,
37 instances involve out-of-date WordNet 1.6 sense numbers,
11 instances involve senses new to WordNet 3.0,
1 instance involves both of these issues,
and 1 instance is an annotation error on the Italian side.
We interpret these findings
as very strong support for the soundness of our theory.

\begin{table}[t]
\centering
\begin{small}
\begin{tabular}{c|c|c|c|c}
            & \multicolumn{2}{|c|}{Corollary \ref{theorem-tis}} &
\multicolumn{2}{|c}{Theorem \ref{four-senses}} \\
\hline
            & en$\rightarrow$it & it$\rightarrow$en &
en$\rightarrow$it & it$\rightarrow$en \\
\hline
Instances   &  1792 & 10597 & 19080 & 21689 \\
Exceptions  &   194 &  1069 &  1965 &  3298 \\
\end{tabular}
\end{small}
\caption{The results of the MSC error detection experiment.}
\label{table-thm1}
\vspace{-2mm}
\end{table}

\subsection{Substitution Test Experiment}
\label{hip}

In order to extend the scope of our analysis,
we performed an annotation experiment
based on the substitution test for synonymy (Section \ref{synonymy}).
The rationale of the experiment is that
a sense annotation must be incorrect
if substituting it with another sense from the same synset
either changes 
the meaning of the sentence
or renders it meaningless.

We identified 77 of the 194 exceptions to Corollary~\ref{theorem-tis},
in the {\em en}$\rightarrow${\em it} direction,
such that 
exactly one of the two English senses
shares a synset 
with a sense of the word of the other English sense.
We then created a set of 77 English sentence pairs that
differ only in the word in question.
The original sentence
for each exception
is randomly selected from 
the set of the sentences in MSC that correspond to 
the exception.
In the modified sentence, the word annotated with the first sense is replaced 
with the word of the second sense from the same synset.
For example,
{\em Their world turned black}
is modified to
{\em Their world reversed black}.

We asked two native English speakers to decide independently
whether the original and modified sentences had the same meaning.
In 82\% of the cases, 
the annotators judged that the meaning was not preserved,
which implies that the sense annotation in MSC is incorrect.
Furthermore, 
our manual analysis based on 
the contents, glosses and usage examples of the synsets
showed that 6 
out of the remaining 8 sentence pairs 
also involved 
out-of-date sense annotations.
Since we were unable to make a confident judgment on
the remaining two instances, 
we conclude that
the substitution test experiment
yields no clear exceptions to our theory.

\subsection{Automatic Error Correction}
\label{auto-err-cor}

\begin{figure}[t]
\begin{small}
\begin{algorithmic}[1]
    \State{Input: Sense alignment pairs 
      $(s_x, t_u)$ and $(s_y, t_v)$ 
      such that syn$(t_u, t_v)$}
    \If{NOT syn$(s_x, s_y)$}
      \State{$P \gets$ the multi-synset containing $t_u$ and $t_v$}
      
      \If{$w(s_x) \in P$}
        \State{$s'_x \gets (w(s_x), P)$}
        \State{CORRECT: $(s_x, t_u) \;\rightarrow\; (s'_x, t_u)$}
      \Else
        \State{ADD: $w(s_x)$ to $P$}
      \EndIf
      
      \If{$w(s_y) \in P$}
        \State{$s'_y \gets (w(s_y), P)$}
        \State{CORRECT: $(s_y, t_v) \;\rightarrow\; (s'_y, t_v)$}
      \Else
        \State{ADD: $w(s_y)$ to $P$}
      \EndIf
      
    \EndIf
\end{algorithmic}
\end{small}
\caption{Error correction algorithm.
$w(s)$ refers to the word of which $s$ is a sense.
$(d, S)$ is the sense of word $d$ in synset $S$.}
\label{pcode}
\end{figure}
 
The sense-level verification experiments
demonstrate that our theory can be applied to 
automatically {\em detecting} errors in sense-annotated corpora.
In this section,
we propose an algorithm for
{\em correcting} such errors,
which is also able to amend a corresponding multi-wordnet.
The algorithm is based on Theorem~\ref{four-senses},
which predicts that
any two pairs of aligned bitext senses
that are related by synonymy in either language
must all share the same multi-synset.

The pseudo-code of the error correction algorithm 
is shown in Figure \ref{pcode}.
The algorithm takes as input 
two sense alignment pairs,
and outputs a suggested error correction
for any exception to Theorem \ref{four-senses}.
When an exception to Theorem \ref{four-senses} is detected,
the algorithm either corrects the corresponding annotation in the bitext 
or
suggests a new sense to be added to the multi-wordnet.
The algorithm can be applied in either translation direction.

When applied to the English part of MSC and MWN,
the algorithm 
suggests 9028 sense corrections,
and 1166 sense additions.
We verified the suggestions on the sample of 50 manually-analyzed exceptions 
described in Section~\ref{analysis}.
We find that 
34 out of 39 proposed sense corrections
and 9 out of 13 proposed sense additions
are correct,
yielding an overall
accuracy of 83\%. 
We conclude that the algorithm could be effectively applied 
to automatically update sense annotations in a bitext,
and, more generally,
to correct errors and omissions in lexical resources.

\section{Concept Universality}
\label{cu}

We have demonstrated that word senses in wordnets
are objectively determined by 
the relation of near-synonymy between words.
Synsets are equivalence classes of synonymous senses,
which represent lexicalized concepts.
These concepts are discrete and disjoint.
Our theory does not contradict
the well-known thesis of \newcite{kilgarriff1997}
that word senses can only be defined 
relative to an intended application.
His critique,
which was formulated before 
WordNet's adoption as the standard WSD sense inventory,
is aimed at dictionary senses 
defined by lexicographers independently for each word.
In contrast, 
wordnet senses are grounded in the concept of synonymy,
and our theory is driven by multi-lingual applications, including translation.

Since senses are induced by near-synonymy relations between words,
we posit that
the number of senses in a wordnet cannot be substantially reduced 
without violating the synset properties 
formulated in Section \ref{synsets}.
In particular,
synset property~\#\ref{pShare} implies that  
each non-absolute synonym word pair
must involve 
multiple distinct word senses.
As a consequence,
the coarse-grained sense inventories created 
by clustering wordnet senses \cite{navigli2006,hovy2006}
cannot be assumed to preserve the synset properties.

The theorem and corollaries 
in Section~\ref{sense_theorems}
establish that 
all senses that are synonymous 
or translationally equivalent
share the the same multi-synset.
This implies a one-to-one mapping between synsets across languages,
with lexical gaps represented by empty synsets.
If we view a pair of wordnets as a bipartite graph
in which nodes are non-empty synsets
and edges represent the relation of translational equivalence,
then every node has a degree of at most one.
Since every synset represents a different lexicalized concept,
this implies that a concept in one language 
cannot correspond to more than one concept in another language.
We refer to this
implication of our theory as 
the {\em concept universality principle}.

Although there are many possible ways 
of organizing the semantic space into concepts,
the set of lexicalized concepts in a wordnet 
is deterministically induced 
by intra-lingual near-synonymy.
This is achieved by the objective application of 
the binary judgments of native speakers 
on the substitution test.
Because these sets of concepts differ between languages,
the creation of a multi-wordnet induces 
a common inter-lingual set of concepts,
determined by cross-lingual synonymy.
This in turn is based on the judgments of bilingual speakers
on the translational equivalence of pairs of senses.
The deterministically induced set of concepts 
in a hypothetical multi-wordnet encompassing all natural languages
could therefore be considered universal.
Not all of those concepts are necessarily lexicalized in any given language,
but each concept is lexicalized in at least one language.
Our theory implies that there is an exact matching between lexicalized concepts
across languages,
which satisfies the multi-wordnet assumption.

In practical terms,
the concept universality principle implies that
any differences in coverage between concepts across languages
must be resolved 
by increasing the granularity of the corresponding multi-wordnets.
For example, 
if one language makes a lexical distinction between 
``father's brother'' and ``mother's brother or aunt's husband'' 
and another language has different words for 
``father or mother's brother'' and ``aunt's husband'',
then all three of these concepts need to be represented 
by distinct synsets
in a multi-wordnet.\footnote{A similar argument can be made for
color terminology \cite{mccarthy2019}.}
This concept-adding approach is necessary to preserve the
multi-wordnet assumption,
which ensures that multi-synsets encode correct word translation pairs.
It also offers a theoretically-sound solution to the concerns
raised by \newcite{francopoulo2009} and \newcite{kwong2018}
regarding the expand model.

The concept universality principle,
which 
we have shown
to follow logically from the fundamental assumptions of wordnets,
provides theoretical support for 
constructing multi-wordnets using
the expand model, as opposed to the merge model 
(Section~\ref{multisynsets}).
A set of universal concepts in the expand model
provides a consistent level of granularity across languages,
as opposed to the variable level of granularity 
of individual wordnets in the merge model.
At the same time,
the principle provides a basis 
for avoiding bias towards English lexicalization patterns,
which has its roots in
the practice of founding new multi-wordnets 
on the synset structure of the original Princeton WordNet.
Because the expand model specifies no procedure for adding new synsets,
existing multi-wordnets such as BabelNet
are restricted to the set of concepts that was created for English.
We hope that 
the adoption of the universality principle 
will lead to 
the incorporation of conceptual distinctions
from other languages,
thus
guiding the evolution of multi-wordnets 
away from the hegemony of English,
and
toward greater linguistic diversity.

\section{Conclusion}
\label{conclusion}

We have proposed a 
unifying treatment of the notions of sense, synonymy and translational equivalence.
The resulting theory
formalizes the relationship between words and senses 
in both {\monolingual} and {\multilingual} settings.
In the future, we plan to investigate 
how our theory can best facilitate the task of
automating the
construction of
semantic resources. 
We expect that sound theoretical foundations will also lead to
improvements in both word sense disambiguation and machine translation.

\bibliographystyle{named}
\bibliography{wsd}

\end{document}